\definecolor{newcolor}{rgb}{.8,.349,.1}
\newtheorem{theorem}{Theorem}
\begin{document}
\begin{frontmatter}

\title{An Exact Finite-dimensional Explicit Feature Map for Kernel Functions}

\author[1]{Kamaledin Ghiasi-Shirazi\corref{cor1}} 
\cortext[cor1]{Corresponding author: 
  Tel.: +98-051-3880-5158}
\ead{k.ghiasi@um.ac.ir}

\author[2]{Mohammadreza Qaraei}
\ead{mohammadreza.mohammadniaqaraei@aalto.fi}

\affiliation[1]{organization={Department of Computer Engineering, Faculty of Engineering ,Ferdowsi University of Mashhad (FUM)},
                city={Mashhad}, 
                country={Iran}}

\affiliation[2]{organization={Department of Computer Science, Aalto University},
                city={Helsinki}, 
                country={Finland}}

\begin{abstract}
Kernel methods in machine learning use a kernel function that takes two data points as input and returns their inner product after mapping them to a Hilbert space, implicitly and without actually computing the mapping. For many kernel functions, such as Gaussian and Laplacian kernels, the feature space is known to be infinite-dimensional, making operations in this space possible only implicitly. This implicit nature necessitates algorithms to be expressed using dual representations and the kernel trick.
In this paper, given an arbitrary kernel function, we introduce an explicit, finite-dimensional feature map for any arbitrary kernel function that ensures the inner product of data points in the feature space equals the kernel function value, during both training and testing. The existence of this explicit mapping allows for kernelized algorithms to be formulated in their primal form, without the need for the kernel trick or the dual representation. 
As a first application, we demonstrate how to derive kernelized machine learning algorithms directly, without resorting to the dual representation, and apply this method specifically to PCA. As another application, without any changes to the t-SNE algorithm and its implementation, we use it for visualizing the feature space of kernel functions.
\end{abstract}

\begin{keyword}
Kernel methods\sep explicit feature map\sep finite-dimensional feature space 
\end{keyword}

\end{frontmatter}


\section{Introduction}
\label{sec:introduction}
Kernel methods are one of the important techniques in machine learning, which alongside and in combination with other methods such as Bayesian learning, neural networks, and deep learning play a crucial role in forming suitable solutions for machine learning problems. The essence of kernel methods is the kernel trick that involves replacing all inner products in a machine learning algorithm with a kernel function. 
Kernel functions have the property that, for each of them, there exists a Hilbert space and a mapping from the input space to that Hilbert space in such a way that the value of the kernel function for any pair of input values equals the inner product of the mapped inputs in that Hilbert space. 
Specifically, for each kernel function \( k: X \times X \rightarrow \mathbb{R} \) defined over an input space \( X \), there exist a Hilbert space \( H \) and a mapping \( \phi: X \rightarrow H \)  such that for every \( x, z \in X \), we have \( k(x,z) = \langle \phi(x), \phi(z) \rangle_H \).

An important point is that kernel functions are usually simple mathematical expressions whose values equal the inner product of their inputs after mapping to a Hilbert space, without performing the mapping explicitly. For example, the Gaussian kernel function, expressed with the simple formula 
\begin{eqnarray} 
k(x,z) = \exp\left(-\frac{1}{2\sigma^2} \|x-z\|^2\right),
\end{eqnarray}
corresponds to an infinite-dimensional Hilbert space.

For various reasons, it has always been taken form granted that, for a general kernel function, an explicit finite-dimensional mapping of data to the corresponding Hilbert space does not exist. Firstly, many well-known Hilbert spaces associated with kernel functions, such as RKHS (Reproducing Kernel Hilbert Spaces) \citep{Aronszajn1950} and the Hilbert space of the Mercer's theorem \citep[page 36][]{Scholkopf2002}, are spaces of functions or infinite sequences which are not computationally suitable for explicit mapping. Secondly, except in special cases, Hilbert spaces corresponding to kernel functions are typically infinite-dimensional, leading to the belief that explicit computation of such mappings is not feasible from a computational perspective.

If the computation of an explicit feature mapping were possible, we could gain a sense of the feature spaces of kernel functions, helping us in a better selection of kernel functions for machine learning algorithms\footnote{Many methods have been proposed for obtaining an explicit feature mapping, independent of the number of data points, to reduce the computational burden of kernel methods \citep{Francis2021}. However, for general kernel functions, these feature mappings are only approximations.}. The existence of an exact explicit feature space facilitates expressing the learning algorithms in their primal form, eliminating the need for the dual form. Consequently, the parameters of a learning machine can be learned directly without expressing the solution in terms of training data.

One notable method for approximating the feature mapping of kernel functions is the Nyström method \citep{Williams2001}. This method introduces an explicit feature mapping by restricting the kernel expansion to a finite number of terms with the largest eigenvalues and introducing a data-dependent approximation of the eigenfunctions of the kernel function.

Another prominent approach in approximating the feature mapping of kernel functions is the random Fourier feature map method \citep{Rahimi2007}. This method approximates the feature mapping of translation-invariant kernel functions efficiently using sampling from the Fourier transform of these functions.
These approximation methods were extended to families of additive kernels \citep{Vedaldi2012} and to one-shot similarity kernels \citep{Zafeiriou2013}. 
Additionally, to reduce feature-space dimensions, \citep{Hamid2014} suggested to randomly project vectors into a low-dimensional feature space. 
Despite the computational improvements of kernel-based algorithms in the mentioned methods, all these approaches compute the feature mapping of kernel functions approximately, leading to the mindset that 
an exact computation of the feature mapping is infeasible.

In this article\footnote{This work has been previously published in Persian \citep{Ghiasi2022}}, we propose a method for computing an exact explicit feature mapping for any desired kernel function in kernel-based machine learning methods. More precisely, for each machine learning algorithm trained on training data \( x^{(1)}, \ldots, x^{(N)} \), we introduce a finite-dimensional Hilbert space and an explicit mapping such that the value of the kernel function between an arbitrary input and a training data point equals the inner product between them in that feature space. It is important to note that in any machine learning algorithm, one of the inputs to the kernel function is always a training data point. Specifically, in the training phase, both inputs to the kernel function are from the training data, while during testing, one input is from the training data, and the other is from the testing data. Therefore, the proposed method, which assumes one input is always from the training data, suffices for calculating an explicit and exact feature mapping of kernel functions in machine learning algorithms.

We should mention that \citep{Francis2020} also claims the proposal of an explicit and exact feature mapping. They first introduce an explicit feature mapping and then, based on the inner product in that space, define a specific kernel function. By construction, the introduced feature mapping is explicit for that kernel function. However, no explicit feature mapping was introduced for general kernel functions.

The rest of the paper continues as follows. In Section 2, we introduce the proposed explicit, exact, and finite-dimensional feature mapping. In Section 3, we derive the kernelized version of the PCA algorithm in both primal and dual forms using the introduced explicit feature mapping. We subsequently combine the primal and dual solutions to reach a new computationally more efficient solution. In Section 4, we visualize the feature space of kernel functions using t-SNE on the MNIST dataset, illustrating that the suitability or unsuitability of kernel functions for a dataset can be grasped by this visualization method.

\section{The proposed explicit, exact, and finite-dimensional feature mapping}
Let \( k \) be an arbitrary kernel function and \( \{x_1, \ldots, x_N\} \) be a given set of training data. 
Assuming that \( H \) is a Hilbert space corresponding to this kernel function, and \( \psi: X \rightarrow H \) is a mapping from the input space $X$ to this Hilbert space, then for any two arbitrary data points \( x, z \in X \), we have:
\begin{eqnarray} k(x, z) = \langle \psi(x), \psi(z) \rangle_H. \end{eqnarray}

We aim to find a finite-dimensional feature space \( F \) and an explicit mapping \( \phi: X \rightarrow F \) such that for every pair \( (x_n, z) \), where \( x_n \) is a training data point and \( z \) is an arbitrary data point:
\begin{eqnarray} k(x_n, z) = \langle \phi(x_n), \phi(z) \rangle_F \end{eqnarray}

The above condition ensures that the non-kernelized learning algorithm's performance in the finite-dimensional space \( F \),  during both training and testing, matches its performance in the Hilbert space \( H \).

\begin{theorem}
Suppose \( K \) is a kernel matrix over \( N \) training data points \( \{x_1, \ldots, x_N\} \). Consider the mapping \( \phi: X \rightarrow \mathbb{R}^N \) defined as:

\begin{eqnarray} 
\phi(z) = K^{-1/2} \begin{bmatrix} k(x_1, z) \\ \vdots \\ k(x_N, z) \end{bmatrix} 
\end{eqnarray}.

For every pair \( (x_n, z) \), where \( x_n \) is a training data point and \( z \) is an arbitrary data point, we have:

\begin{eqnarray} \langle \phi(x_n), \phi(z) \rangle = k(x_n, z) \end{eqnarray}
\end{theorem}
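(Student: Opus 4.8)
The plan is to exploit the special structure that appears when the second argument of the kernel is itself a training point. First I would introduce the shorthand $\kappa(z) = [k(x_1,z),\ldots,k(x_N,z)]^{\top}$ for the vector of kernel evaluations, so that the definition reads $\phi(z) = K^{-1/2}\kappa(z)$. The crucial observation is that when $z = x_n$, the vector $\kappa(x_n)$ is exactly the $n$-th column of the kernel matrix $K$; using the symmetry $k(x_i,x_n)=k(x_n,x_i)$, this says $\kappa(x_n) = K e_n$, where $e_n$ is the $n$-th standard basis vector of $\mathbb{R}^N$.

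Next I would simplify $\phi(x_n)$. Substituting, $\phi(x_n) = K^{-1/2} K e_n = K^{1/2} e_n$, where I use that $K^{-1/2}$ and $K$ commute (both are functions of the same symmetric matrix) and that $K^{-1/2}K = K^{1/2}$. Because $K$ is the Gram matrix of a kernel it is symmetric positive semidefinite, so its principal square root $K^{1/2}$ is well defined and symmetric, and consequently $\phi(x_n)^{\top} = e_n^{\top} K^{1/2}$.

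Then the inner product collapses: $\langle \phi(x_n),\phi(z)\rangle = \phi(x_n)^{\top}\phi(z) = e_n^{\top} K^{1/2} K^{-1/2} \kappa(z) = e_n^{\top}\kappa(z) = k(x_n,z)$, where the middle equality uses $K^{1/2}K^{-1/2}=I$ and the last step simply reads off the $n$-th entry of $\kappa(z)$, which is $k(x_n,z)$ by definition. This is precisely the claimed identity, and it holds for every training index $n$ and every $z\in X$.

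The main obstacle is not the algebra but the hypotheses that make the matrix square roots legitimate. For $K^{-1/2}$ to exist I must assume $K$ is invertible, i.e.\ strictly positive definite rather than merely positive semidefinite; for a generic kernel and distinct training points this typically holds, but it should be stated explicitly. The clean cancellation $K^{1/2}K^{-1/2}=I$ relies on this invertibility together with the fact that functions of the same symmetric matrix commute. If one only has a singular positive semidefinite $K$, the same computation still goes through after replacing $K^{-1/2}$ by the Moore--Penrose pseudoinverse, because $\kappa(x_n)=Ke_n$ lies in the range of $K$ and is therefore unaffected by the projection $K^{1/2}K^{-1/2}$; I would flag this as the one place where care with the spectral calculus is required.
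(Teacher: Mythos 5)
Your proof is correct and follows essentially the same route as the paper's: both hinge on the observation that the kernel vector evaluated at a training point $x_n$ is the $n$-th column (row) of $K$, so that it cancels against the inverse square roots and leaves $e_n^{\top}\kappa(z)=k(x_n,z)$. Your extra remarks on requiring $K$ to be strictly positive definite (or using the Moore--Penrose pseudoinverse otherwise) address a hypothesis the paper leaves implicit, and are a worthwhile addition.
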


\begin{proof}
We have:

\begin{eqnarray}
\begin{aligned} &\langle \phi(x_n), \phi(z) \rangle = \phi(x_n)^T \phi(z) \\&= \begin{bmatrix} k(x_1, x_n) & \cdots & k(x_N, x_n) \end{bmatrix} K^{-1} \begin{bmatrix} k(x_1, z) \\ \vdots \\ k(x_N, z) \end{bmatrix} 
\end{aligned}
\end{eqnarray}

Since \( [k(x_1, x_n), \ldots, k(x_N, x_n)] \) is the \( n \)-th row of matrix \( K \), when this row is multiplied by \( K^{-1}\) it results in a vector that has a value of \(1\) in its \( n \)-th element and \(0\) elsewhere.
Therefore,
\begin{eqnarray}
\begin{aligned} 
&\langle \phi(x_n), \phi(z) \rangle \\
&= [k(x_1, x_n), \ldots, k(x_N, x_n)] K^{-1} [k(x_1, z), \ldots, k(x_N, z)]^T \\
&= k(x_n, z) 
\end{aligned}
\end{eqnarray}

This proves that \( \phi(z) = K^{-1/2} [k(x_1, z), \ldots, k(x_N, z)]^T \) satisfies \( \langle \phi(x_n), \phi(z) \rangle = k(x_n, z) \) for any pair \( (x_n, z) \), demonstrating the desired property of the proposed mapping \( \phi \).
\end{proof}

\section{Kernel PCA analysis using the introduced explicit and exact mapping}
\citep{Scholkopf1998} proposed the kernel version of the PCA algorithm using the dual representation. We have summarized this method in Appendix A for completeness. In this section, by using the proposed explicit and exact mapping, we obtain the kernel PCA algorithm in its primal form. Then, we derive the dual solution in a new way without resorting to the kernel trick. Finally, by combining the primal and dual solutions, we arrive at a new solution that is computationally simpler.

\subsection{Direct PCA analysis in the explicit and exact feature space of a kernel function}

To perform PCA analysis in a feature space, firstly it is necessary to center the mean of the training data in that space. Given that we have the data explicitly in the feature space, we can compute their mean as follows:
\begin{eqnarray} 
\begin{aligned}
M &= \frac{1}{N} \sum_{n=1}^{N} \phi(x_n) \\
&= \frac{1}{N} \sum_{n=1}^{N} K^{-1/2} [k(x_1, x_n), \ldots, k(x_N, x_n)]^T \\ 
&= K^{-1/2} \frac{1}{N} \sum_{n=1}^{N} [k(x_1, x_n), \ldots, k(x_N, x_n)]^T \\
&= K^{-1/2} Ke = K^{1/2} e, 
\end{aligned}
\end{eqnarray}
where \( e \) is an \( N \)-dimensional vector with all elements being \( 1/N \). Let \( \psi \) be a new mapping that centers data besides transferring them to the feature space. We can express \( \psi \) as:
\begin{eqnarray} \psi(z) = \psi(z) - M = K^{-1/2} ([k(x_1, z), \ldots, k(x_N, z)]^T - Ke) \end{eqnarray}

In conventional kernel-based methods, computations are firstly expressed in the feature space using an unknown mapping \( \phi \). Then, the kernel trick is applied, and all instances of inner-product computation in the feature space are replaced with the kernel function. On the other hand, in the new method, all computations are based on the explicit mapping of training and testing data to the feature space by the proposed function \( \psi \). Kernel function computations appear in the explicit mapping, not by applying the kernel trick.
Specifically, for each training data \( x_n \), we have:
\begin{eqnarray}
\begin{aligned}
 \psi(x_n) &= K^{-1/2} ([k(x_1, x_n), \ldots, k(x_N, x_n)]^T - Ke) \\
 &= K^{-1/2} (K_{*,n} - Ke) = K^{1/2} (e_n - e) 
\end{aligned} 
\end{eqnarray}
where \( K_{*,n} \) is the \( n \)-th column of matrix \( K \), and \( e_n \) is an \( N \)-dimensional vector with a 1 in its \( n \)-th element and 0 elsewhere.

To perform PCA analysis, we then compute the covariance matrix in the feature space:
\begin{eqnarray}
\begin{aligned} 
C &= \frac{1}{N} \sum_{n=1}^{N} \psi(x_n) \psi(x_n)^T \\
&= \frac{1}{N} \sum_{n=1}^{N} K^{1/2} (e_n - e) (e_n - e)^T K^{1/2} \\
&= \frac{1}{N} K^{1/2} \left( \sum_{n=1}^{N} (e_n - e) (e_n - e)^T \right) K^{1/2} \\
&= \frac{1}{N} K^{1/2} (I - Nee^T) K^{1/2}
\end{aligned}
\end{eqnarray}

Now, we express the equation \( Cv = \lambda v \) for PCA analysis in the explicit feature space as
\begin{eqnarray}
\label{eq:eigenvalue-problem}
 K^{1/2} (I - Nee^T) K^{1/2} v = N \lambda v .
\end{eqnarray}
Multiplying both sides from the left by \( K^{1/2} \), we obtain:
\begin{eqnarray} K (I - Nee^T) K^{1/2} v = N \lambda K^{1/2} v. \end{eqnarray}
Defining \( u = K^{1/2} v \), we arrive at the following equation:
\begin{eqnarray} K (I - Nee^T) u = N \lambda u.
\end{eqnarray}
It should be noted that in PCA analysis, the vectors \( v \) are normalized, not the vectors \( u \). Therefore, although \( u \) satisfies the above eigenvalue system, its size is not one. The condition \( v^T v = 1 \) implies that:
\begin{eqnarray} v^T v = (K^{-1/2} u)^T (K^{-1/2} u) = u^T K^{-1} u = 1.
\end{eqnarray}
Thus, the size of the vector \( u \) must be chosen such that the above relationship holds.

For a test data \( z \), define $k_z=[k(x_1,z), \ldots, k(x_N,z)]$. Given that we have an explicit mapping to the feature space, the feature corresponding to the principal component \( v \) is given by:
\begin{eqnarray}
\begin{aligned}
\langle \psi(z), v \rangle &= \langle K^{-1/2} (k_z^T - Ke), v \rangle \\
&= \langle K^{-1/2} (k_z^T - Ke), K^{-1/2} u \rangle \\
&= (k_z - e^T K) K^{-1/2} K^{-1/2} u \\
&= k_z K^{-1} u - e^T u 
\end{aligned}
\end{eqnarray}
This formula is similar to the prediction formula in Gaussian processes \citep{Williams2006}.

In this way, we obtained KPCA by directly calculating PCA in the feature space. From a computational perspective, one can even map the entire training and testing dataset to the proposed explicit feature space, and then run a standard implementation of PCA there.

\subsection{PCA analysis in the feature space of a kernel function using dual representation without kernel trick}
If we use the dual representation and consider \( v \) as a linear combination of the training data, we have:
\begin{eqnarray}
\begin{aligned} v &= \sum_{n=1}^{N} \alpha_n \psi(x_n) \\
&= \sum_{n=1}^{N} \alpha_n K^{1/2} (e_n - e) \\
&= K^{1/2} (I - Nee^T) \alpha.
\end{aligned}
\end{eqnarray}

Now, rewriting the eigenvalue-eigenvector problem (\ref{eq:eigenvalue-problem}) using the above relation, we get:
\begin{eqnarray}
\begin{aligned} 
&K^{1/2} (I - Nee^T) K^{1/2} v = N \lambda v \Rightarrow\\
&K^{1/2} (I - Nee^T) K (I - Nee^T) \alpha = N \lambda K^{1/2} (I - Nee^T) \alpha.
\end{aligned}
\end{eqnarray}

Multiplying both sides from the left by \( K^{1/2} \), we arrive at:
\begin{eqnarray} K (I - Nee^T) K (I - Nee^T) \alpha = N \lambda K (I - Nee^T) \alpha. 
\end{eqnarray}

Using the definition \( \bar{K} = K (I - Nee^T) \), the above equation simplifies to:
\begin{eqnarray} \bar{K} \bar{K} \alpha = N \lambda \bar{K} \alpha.
\end{eqnarray}

Similar to \citep{Scholkopf1998}, it can be shown that instead of solving the above equation, it suffices to solve the equation:
\begin{eqnarray} \bar{K} \alpha = N \lambda \alpha. \end{eqnarray}

In this equation, \( \alpha \) must be chosen such that \( v \) is normalized. Therefore, \( \alpha \) must satisfy the following:
\begin{eqnarray}
\begin{aligned} \|v\|^2 &= \langle K^{1/2} (I - Nee^T) \alpha, K^{1/2} (I - Nee^T) \alpha \rangle \\&= \alpha^T (I - Nee^T) K (I - Nee^T) \alpha = 1. 
\end{aligned}
\end{eqnarray}

For a test data point \( z \), the feature corresponding to the principal component \( v \) is computed as follows:
\begin{eqnarray}
\begin{aligned} &\langle \psi(z), v \rangle \\
&= \langle K^{-1/2} (k_z^T - Ke), K^{1/2} (I - Nee^T) \alpha \rangle \\
&= k_z (I - Nee^T) \alpha - e^T K (I - Nee^T) \alpha.
\end{aligned}
\end{eqnarray}

\subsection{Comparing solutions of the primal and dual formulations}
An interesting point is that both the primal and dual forms lead to solving the eigenvalue problem of the matrix \( \bar{K} \). This is intriguing, and it is necessary to demonstrate that both solutions are equivalent. In other words, we need to show that vectors \( u \) and \( \alpha \) are scalar multiples of one another. From the equations \( v = K^{1/2} (I - Nee^T) \alpha \) and \( u = K^{1/2} v \), it follows that:
\begin{eqnarray} u = K (I - Nee^T) \alpha = \bar{K} \alpha = N \lambda \alpha \end{eqnarray}

Therefore, we observe that the vector \( u \) from the primal solution and the vector \( \alpha \) from the dual solution are scalar multiples of one another. Thus, we can express \( \langle \psi(z), v \rangle \) as a combination of the primal and dual solutions, which frees us from computing a mean-centered kernel matrix.

\begin{eqnarray}
\begin{aligned} 
\langle \psi(z), v \rangle &= k_z \alpha - N k_z e e^T \alpha  - N \lambda e^T \alpha \\
&= k_z \left(\alpha - (N e^T \alpha) e\right) - N \lambda e^T \alpha. 
\end{aligned}
\end{eqnarray}

\section{Visualizing feature spaces of kernel functions with t-SNE}
The t-SNE algorithm is highly successful for visualizing high-dimensional spaces. One can express distances between data points in the feature space using a kernel function and modify the t-SNE algorithm to visualize the feature space. The distance between mapped data \( x \) and \( z \) in the feature space can be computed as follows:

\begin{eqnarray} 
\begin{aligned}
&\| \phi(x) - \phi(z) \|^2 \\
&= \langle \phi(x), \phi(x) \rangle + \langle \phi(z), \phi(z) \rangle - 2 \langle \phi(x), \phi(z) \rangle \\
&= k(x, x) + k(z, z) - 2 k(x, z) 
\end{aligned}
\end{eqnarray}

However, our proposed method allows for visualizing the feature space of a kernel function without any changes to the t-SNE code, simply by changing the input data to the algorithm. Having an explicit finite-dimensional feature space enables us to execute t-SNE in the feature space and provide a suitable 2D representation of the arrangement of data in that space.

In this section, we visualize the MNIST data in the feature spaces of various kernel functions. The best result of kernel methods on the MNIST dataset has been achieved by a smart choice of a polynomial kernel, as described on page 341 of \citep{Scholkopf2002}. Here, we compare this polynomial kernel with other usual choices of polynomial kernels. By visualizing the feature spaces of these kernels, we demonstrate that the MNIST data exhibits much better separability in the feature space of the polynomial kernel designed in \citep{Scholkopf2002}. It should be noted that the possibility of visualizing the feature space for kernel functions is provided by the explicit and exact feature mapping introduced in this article.

The MNIST data set consists of 60,000 training samples and 10,000 test samples of digits 0 to 9. The inventors of the t-SNE algorithm used 2,500 samples from the MNIST data for visualizing the data in the input space. Here, we consider digits 2, 4, and 7 from this data set, selecting 500 samples from each of the training and test data sets. Thus, we obtain 1,500 training data and 1,500 test data samples. We use the training data to train t-SNE and visualize the test data using the learned mapping.

The MNIST dataset consists of $28\times 28$ images where the intensity of each pixel ranges from 0 to 255. Initially, the pixel intensities are scaled to a range between 0 and 1 by dividing by 255. Assuming \( x \) and \( z \) are two 784-dimensional vectors representing two images, the following polynomial kernel function can be defined on them:
\begin{eqnarray}
\label{eq:k1}
 k_1(x, z) = \langle x, z \rangle^9, \end{eqnarray}
where we assume \( \langle x, z \rangle \) is defined as follows to ensure its value is always between 0 and 1:
\begin{eqnarray} 
\langle x, z \rangle = \left( \frac{x}{\sqrt{784}} \right)^T \left( \frac{z}{\sqrt{784}} \right) = \frac{1}{784} \sum_{i=1}^{784} x_i z_i 
\end{eqnarray}

Although this approach seems logical and sufficient, excellent results on kernel data have been achieved by support vector machines using a different kernel function. \citep{Scholkopf2002} initially transformed the pixel intensity range to [-1, 1]. Given that most pixels are either black or white, this transformation ensures that most pixel values become approximately 1 or -1. Consequently, the inner product of a vector with itself consistently yields a value close to \( 28 \times 28 = 784 \), making the similarity of each vector with itself almost equal to one when divided by 784. To ensure that the inner product kernel function always has a non-negative value, as is the case for Gaussian kernels, they introduced the following kernel function:
\begin{eqnarray}
\label{eq:k2}
\begin{aligned}
 k_2(x, z) &= \left( \langle 2x - 1, 2z - 1 \rangle + 1 \right)^9 / 512 \\
 &= \left( \left( \langle 2x - 1, 2z - 1 \rangle + 1 \right) / 2 \right)^9, 
 \end{aligned}
\end{eqnarray}
where the pixel intensities are initially transformed to the range [-1, 1] to ensure that \( \langle 2x - 1, 2z - 1 \rangle \) is always between -1 and 1. 

In this section, we visualize the feature spaces of $k_1$ in (\ref{eq:k1}) and $k_2$ in (\ref{eq:k2}) using t-SNE. To achieve this, using the proposed exact and explicit mapping, we explicitly represent the training data in the feature space and then use the t-SNE tool (without any modifications) to visualize data in the feature space. Figure 1 visualizes the test data in the feature space of kernel function \( k_1 \), and Figure 2 visualizes the test data in the feature space of kernel function \( k_2 \). As can be seen, the smart choice of the kernel function, as described on page 341 of \citep{Scholkopf2002}, has significantly contributed to the success of kernel methods on the MNIST dataset. Additionally, Figures 3 and 4 show the results of Fisher kernel discriminant analysis using kernel functions \( k_1 \) and \( k_2 \) on the test data. Once again, it is evident that \( k_2 \) has performed significantly better than \( k_1 \). In our implementation, instead of using kernel Fisher analysis \citep{Baudat2000}, we applied non-kernelized multi-class Fisher analysis \citep{Fukunaga1990} on the explicit feature space of kernel functions.

\section{Conclusions}
In this paper, we introduced an explicit finite-dimensional feature mapping for any desired kernel function. 
The proposed explicit feature space is exact as long as at least one of the two inputs to the kernel function belongs to the training data. Since in both the training and testing phases, one of the inputs to the kernel function is taken from the training data, the proposed explicit mapping is exact for machine learning applications. This finding has significant implications, including:

\begin{enumerate}
\item Solving machine learning problems in the feature space of kernel functions in their primal form without resorting to the dual form.
\item Solving machine learning problems in the feature space in their dual form without resorting to the kernel trick.
\item Explicitly mapping data to the feature space and executing a machine learning algorithm on the mapped data. This enables presenting a kernelized version of algorithms using existing non-kernelized tools, simply by changing their input data.
\end{enumerate}

\begin{figure}
\centering
\includegraphics[width=0.5\textwidth]{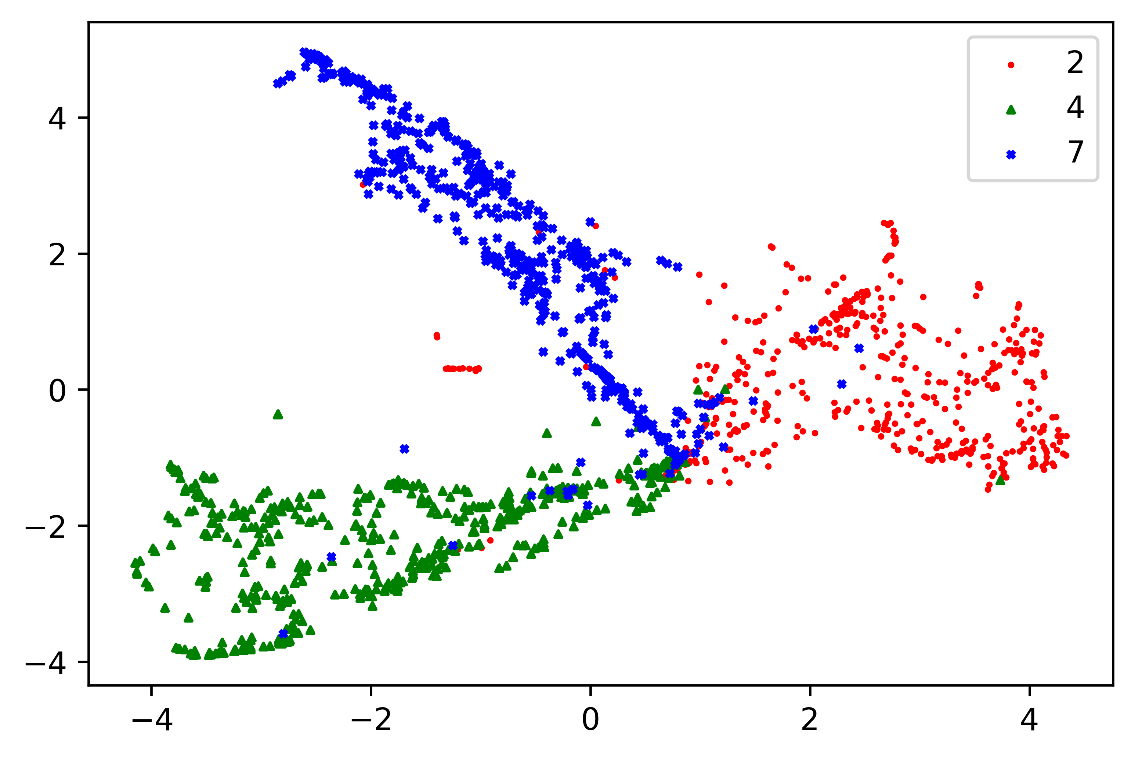}
\caption{Visualization of the feature space using the inappropriate inner product kernel function \( k_1 \) for digits 2, 4, and 7 from the MNIST dataset. \( k_1 \) is an inner product kernel function with a degree of 9 applied to pixels with intensities in the range [0, 1]. Due to the presence of pixels with a value of zero, the similarity of a data point with itself can be very low.}
\label{fig:1}
\end{figure}

\begin{figure}
\centering
\includegraphics[width=0.5\textwidth]{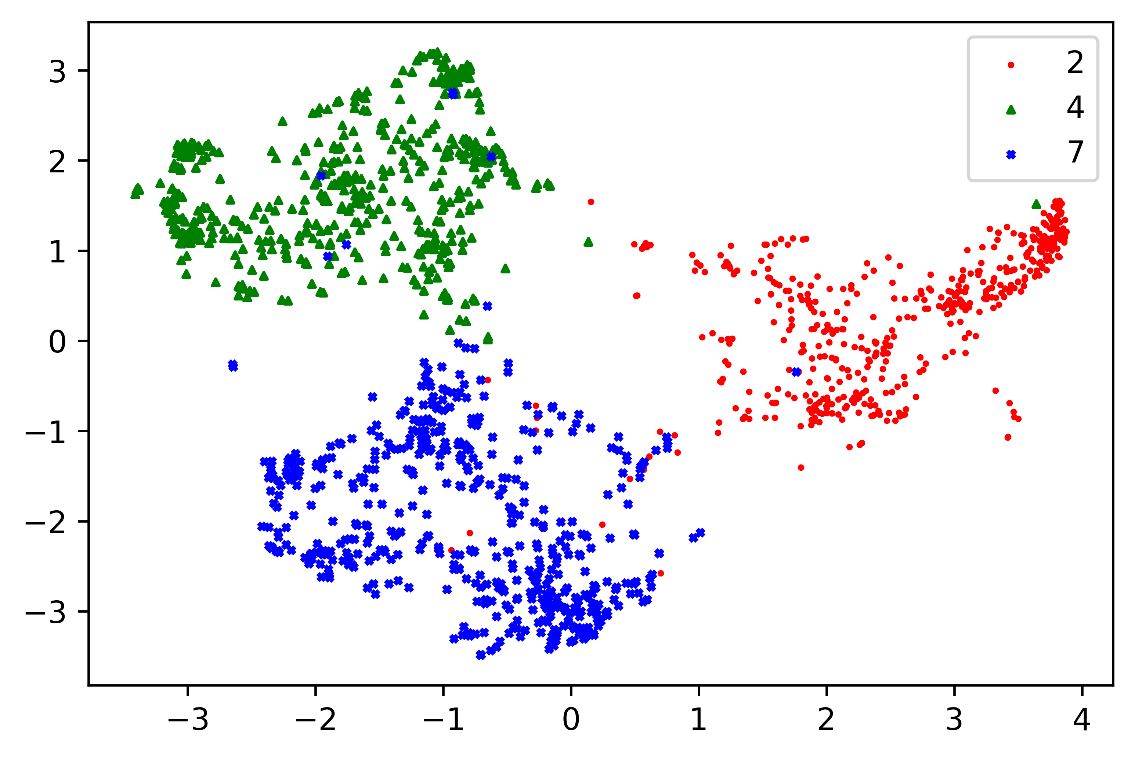}
\caption{Visualization of the feature space using the appropriate inner product kernel function \( k_2 \) for digits 2, 4, and 7 from the MNIST dataset. \( k_2 \) is an inner product kernel function applied to pixels with intensities in the range [-1, 1], and it has been adjusted to function similarly to an RBF kernel, ensuring the similarity of each data with itself is nearly one and the minimum similarity value is zero.}
\label{fig:2}
\end{figure}

\begin{figure}
\centering
\includegraphics[width=0.5\textwidth]{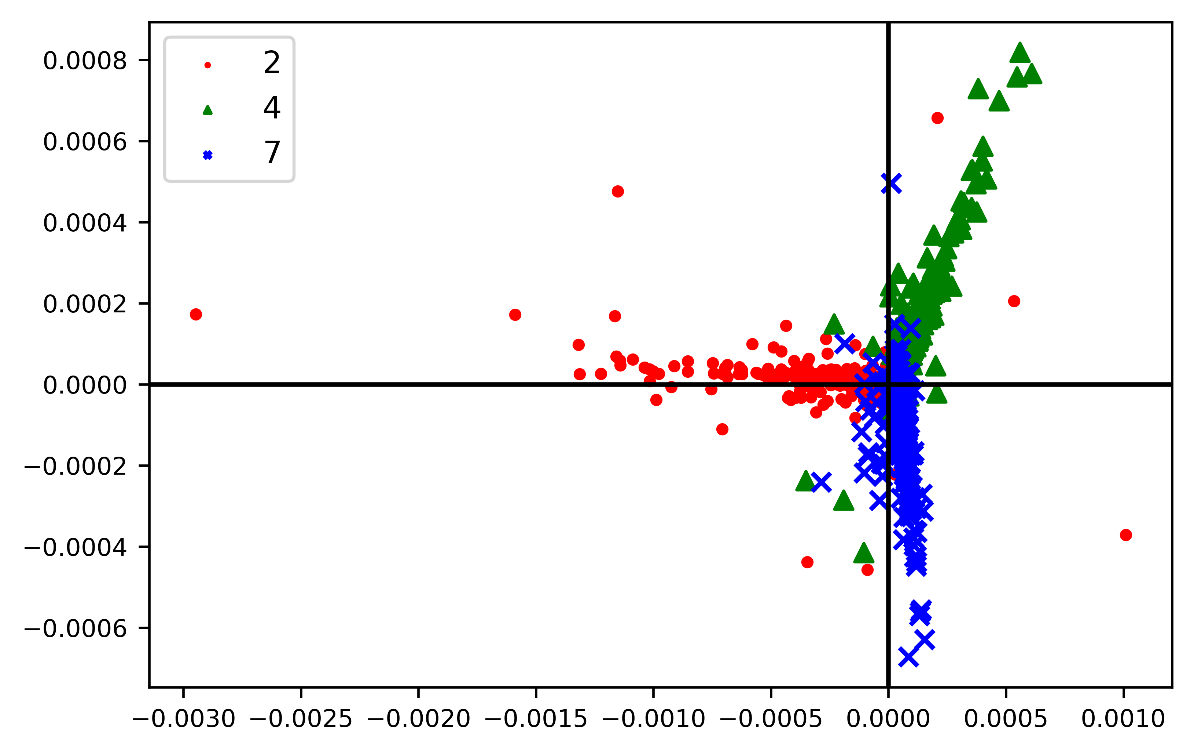}
\caption{Extracted features for the test data using Fisher analysis for the inappropriate inner product kernel function \( k_1 \) applied to digits 2, 4, and 7 from the MNIST dataset. \( k_1 \) is a degree 9 inner product kernel function applied to pixels with intensities in the range [0, 1]. Due to the presence of pixels with a value of zero, the similarity of a data point with itself can be very low.}
\label{fig:3}
\end{figure}

\begin{figure}
\centering
\includegraphics[width=0.5\textwidth]{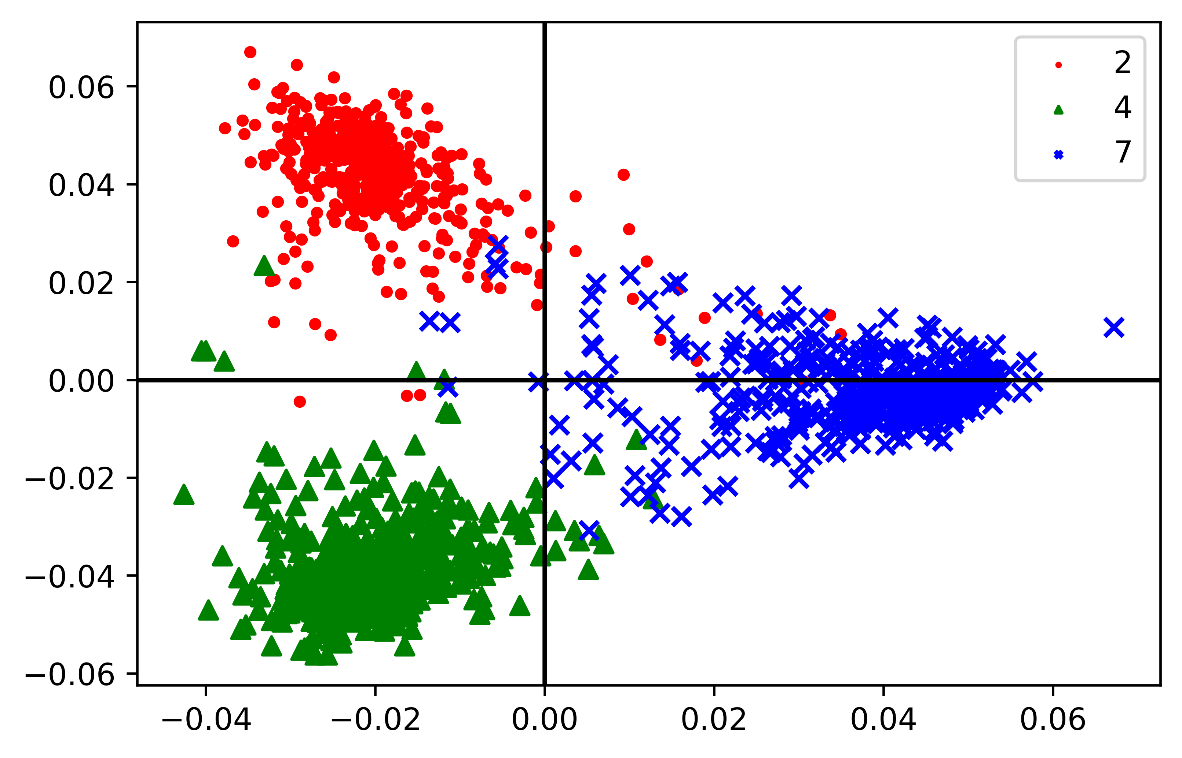}
\caption{Extracted features for the test data using Fisher analysis for the appropriate inner product kernel function \( k_2 \) applied to digits 2, 4, and 7 from the MNIST dataset. \( k_2 \) is an inner product kernel function applied to pixels with intensities in the range [-1, 1]. It has been configured to mimic the behavior of an RBF (Radial Basis Function) kernel, ensuring that each image is highly similar to itself (similarity close to 1) and the minimum similarity value is zero.}
\label{fig:4}
\end{figure}



\newpage


\appendix
\section{Classical method for deriving kernel PCA using kernel trick}
The standard approach \citep{Scholkopf1998} to obtaining the kernel version of the PCA algorithm is as follows. Firstly, it is shown that the eigenvectors can be represented as a linear combination of the training data, yielding a dual representation. Then, assuming that \( \psi \) is a mapping to a feature space where the mean of the training data is zero, an eigenvector $v$ is represented in the feature space as:

\begin{eqnarray}
\label{eq:A1}
 v = \sum_{n=1}^N \alpha_n \psi(x_n).
 \end{eqnarray}

Let the corresponding kernel function for this mapping be \( \bar{k} \). The PCA algorihtm can be formulated in the feature space as:

\begin{eqnarray} \left( \frac{1}{N} \sum_{n=1}^N \psi(x_n) \psi(x_n)^T \right) v = \lambda v.
\end{eqnarray}

Rewriting the above equation using (\ref{eq:A1}), we obtain:

\begin{eqnarray}
\begin{aligned}
&\left( \frac{1}{N} \sum_{n=1}^N \psi(x_n) \psi(x_n)^T \right) \sum_{m=1}^N \alpha_m \psi(x_m) = \lambda \sum_{n=1}^N \alpha_n \psi(x_n)\\
 &\Rightarrow \frac{1}{N} \sum_{n=1}^N \psi(x_n) \sum_{m=1}^N \alpha_m \psi(x_n)^T \psi(x_m) = \lambda \sum_{n=1}^N \alpha_n \psi(x_n)\\
 &\Rightarrow \frac{1}{N} \sum_{n=1}^N \psi(x_n) \sum_{m=1}^N \alpha_m \bar{k}(x_n, x_m) = \lambda \sum_{n=1}^N \alpha_n \psi(x_n)\\
 &\Rightarrow \frac{1}{N} \sum_{n=1}^N \psi(x_n) (K\alpha)_n 
 = \lambda \sum_{n=1}^N \alpha_n \psi(x_n).
\end{aligned}
\end{eqnarray}

Multiplying both sides by \( N \psi(x_l)^T \), we get

\begin{eqnarray}
\psi(x_l)^T \sum_{n=1}^N \psi(x_n) (\bar{K}\alpha)_n = N\lambda \psi(x_l)^T \sum_{n=1}^N \alpha_n \psi(x_n).
\end{eqnarray}

From this, it follows that:

\begin{eqnarray}
\sum_{n=1}^N \bar{k}(x_l, x_n) (\bar{K}\alpha)_n = N\lambda \sum_{n=1}^N \alpha_n \bar{k}(x_n, x_l)
\end{eqnarray}

This result can be expressed in matrix form as:

\begin{eqnarray}
(\bar{K}\bar{K}\alpha)_l = N\lambda (\bar{K}\alpha)_l
\end{eqnarray}

Since the above equation holds for all \( l \) between 1 and $N$, we obtain the following matrix equation:

\begin{eqnarray}
\bar{K}\bar{K}\alpha = N\lambda \bar{K}\alpha.
\end{eqnarray}
It is then sufficient to solve the following equation:
\begin{eqnarray}
\bar{K}\alpha = N\lambda \alpha,
\end{eqnarray}
where:
\begin{eqnarray}
\bar{K} = K - e^T K - Ke + e^T Ke.
\end{eqnarray}

When using the KPCA algorithm to compute the projection of input data \( x \) onto a principal component in the feature space,  \( x \) is first mapped to the zero-mean feature space using \( \psi \), and then multiplied by the principal component:

\begin{eqnarray}
\begin{aligned}
&\langle \psi(x), \sum_{n=1}^N \alpha_n \psi(x_n) \rangle \\
&= \sum_{n=1}^N \alpha_n \langle \psi(x), \psi(x_n) \rangle \\
&= \sum_{n=1}^N \alpha_n \langle \phi(x) - \frac{1}{N} \sum_{m=1}^N \phi(x_m), \phi(x_n) - \frac{1}{N} \sum_{m=1}^N \phi(x_m) \rangle\\
&= \sum_{n=1}^N \alpha_n \left( k(x, x_n) - \frac{1}{N} \sum_{m=1}^N k(x, x_m) \right) \\
&\;\;\;\;- \sum_{n=1}^N \alpha_n \left( \frac{1}{N} \sum_{m=1}^N k(x_m, x_n) + \frac{1}{N^2} \sum_{m=1}^N \sum_{p=1}^N k(x_m, x_p) \right)\\
&= \sum_{n=1}^N \left( \alpha_n - \frac{1}{N} \sum_{m=1}^N \alpha_m \right) k(x, x_n) - e^T K \alpha + e^T K e \alpha^T e.
\end{aligned}
\end{eqnarray}

\end{document}